\documentclass{article}

\usepackage[preprint]{neurips_2026}

\usepackage[utf8]{inputenc} 
\usepackage[T1]{fontenc}    
\usepackage{hyperref}       
\usepackage{url}            
\usepackage{booktabs}       
\usepackage{amsfonts}       
\usepackage{nicefrac}       
\usepackage{microtype}      
\usepackage{xcolor}         
\usepackage{amsmath,amssymb}
\usepackage{amsthm}
\newtheorem{assumption}{Assumption}
\newtheorem{lemma}{Lemma}
\newtheorem{theorem}{Theorem}
\newtheorem*{remark*}{Remark}
\usepackage{amsthm}
\usepackage{amssymb}

\theoremstyle{remark}
\newtheorem{remark}{Remark}
\usepackage{amssymb}
\usepackage{amsthm}

\DeclareMathOperator{\KL}{KL}
\providecommand{\clip}{\operatorname{clip}}

\theoremstyle{plain}
\theoremstyle{definition}

\usepackage{graphicx}

\usepackage[most]{tcolorbox}
\definecolor{takeawayframe}{HTML}{8470FF}

\newtcolorbox{takeaway}[1][]{
    enhanced, breakable,
    colback=takeawayframe!6,
    colframe=black,
    coltext=black,
    boxrule=0.6pt, left=6pt, right=6pt, top=4pt, bottom=4pt,
    fonttitle=\bfseries\small,
    coltitle=takeawayframe!6,
    title={Takeaway}, #1
}

\definecolor{citepink}{HTML}{8470FF}

\hypersetup{
    colorlinks=true,
    citecolor=citepink,  
    linkcolor=red,       
    urlcolor=black,
    filecolor=black
}

\title{Staleness–Learning Rate Scaling Laws for Asynchronous RLHF}

\author{%
Jingwei Song\textsuperscript{1,2,3*},
Haofeng Xu\textsuperscript{1*},
Jie Xiao\textsuperscript{3},
Chengke Bao\textsuperscript{3},
Pengbin Feng\textsuperscript{4}, 
Jingwei Shi\textsuperscript{5}, \\
\textbf{Weixun Wang,}
\textbf{Yuhang Han\textsuperscript{2},}
\textbf{Chuan Wu\textsuperscript{1†},}
\textbf{Linfeng Zhang\textsuperscript{2†},}
\textbf{Bill Shi\textsuperscript{3†}} \\
\textsuperscript{1}The University of Hong Kong
\textsuperscript{2}Shanghai Jiao Tong University
\textsuperscript{3}Gradient \\
\textsuperscript{4}University of Southern California
\textsuperscript{5}The Hong Kong Polytechnic University\\
\texttt{songjingwei@connect.hku.hk}, 
\texttt{tianyu@gradient.network} \\
{\small $^*$Equal contribution, $^\dagger$Corresponding author}
}

\begin{document}

\maketitle

\begin{abstract}
High-throughput RLHF systems often decouple rollout generation from policy optimization, causing the learner to update on stale rollouts. We study this effect in asynchronous GRPO. The analysis makes the behavior policy explicit in the GRPO surrogate and distinguishes the surrogate-gradient mapping used by the learner from the total derivative of a distribution-dependent population objective. Under local boundedness, distributional smoothness, and behavior-policy smoothness assumptions, stale rollouts induce a per-step surrogate-gradient bias of order $O(S\eta)$, where $S$ is the maximum rollout lag and $\eta$ is the learning rate. We further formulate a conditional collapse-time scaling: when within-cycle drift remains below a batch-level clipping radius, the observed collapse horizon is governed primarily by cumulative learner drift $T\eta$; when the local stale-rollout condition is binding, stability instead depends explicitly on $S\eta$. This yields the two-constraint rule $\eta \ll \min\{R_{\mathrm{batch}}/(S G_{\mathrm{upd}}), R_{\mathrm{crit}}/(T G_{\mathrm{upd}})\}$, explaining why the maximum stable learning rate can appear weakly dependent on staleness only in the horizon-limited regime.

\end{abstract}

\section{Introduction}
\label{sec:introduction}

Reinforcement Learning from Human Feedback (RLHF) has emerged as a pivotal technique for aligning Large Language Models (LLMs) with human intent, enabling improvements in instruction following, reasoning capabilities, and safety~\citep{ouyang2022training,christiano2017deep,kaufmann2023survey}.
The canonical RLHF pipeline, exemplified by InstructGPT~\citep{ouyang2022training}, typically comprises three stages: supervised fine-tuning (SFT), reward model training on human preference data, and policy optimization via Proximal Policy Optimization (PPO)~\citep{schulman2017proximal}.
Despite its empirical success, this paradigm faces substantial computational challenges: PPO requires maintaining a separate value network (critic) alongside the policy and reward models, creating significant memory overhead and synchronization bottlenecks in distributed training environments~\citep{shao2024deepseekmath,rafailov2023direct}.

To address these limitations, Group Relative Policy Optimization (GRPO) was introduced by \citet{shao2024deepseekmath} as an efficient alternative to PPO for LLM fine-tuning.
GRPO eliminates the critic network by estimating baselines from group-wise reward statistics, reducing memory consumption and computational cost.
This design choice has proven effective for mathematical reasoning tasks and has been adopted in subsequent large-scale systems such as DeepSeek-R1~\citep{guo2025deepseek}.
By removing the critic and lowering the memory and compute cost of each optimization step, GRPO makes distributed asynchronous training increasingly attractive for RLHF pipelines.

Asynchronous training architectures have long been recognized as a promising avenue for accelerating large-scale optimization~\citep{recht2011hogwild,agarwal2011distributed,lian2015asynchronous}.
In the standard asynchronous parameter-server framework~\citep{li2014scaling}, multiple rollout workers generate experience in parallel using potentially stale policy snapshots, while a central learner continuously applies gradient updates.
This decoupling of rollout generation from policy optimization maximizes throughput by reducing idle waiting time.
The same idea underlies a long line of \emph{decoupled actor-learner} architectures in deep RL, beginning with Gorila~\citep{nair2015massively} and A3C~\citep{mnih2016asynchronous} and continuing through IMPALA~\citep{espeholt2018impala}, Ape-X~\citep{horgan2018distributed}, and SEED RL~\citep{espeholt2020seed}, where dedicated actors stream experience to a centralized learner.
A recurring lesson from this literature is that decoupling is not free: because the actor policy lags the learner, training becomes off-policy, and stable high-throughput learning required explicit corrections---most notably the V-trace off-policy correction introduced by IMPALA~\citep{espeholt2018impala}.
Indeed, some distributed PPO systems deliberately remain \emph{synchronous} to avoid staleness altogether; DD-PPO~\citep{wijmans2020ddppo}, for instance, is designed so that ``no computation is ever stale,'' and reproducibility studies show that even a single step of staleness in actor-learner pipelines measurably alters the learning dynamics~\citep{huang2023cleanba}.
However, the resulting \emph{staleness}---the discrepancy between the policy used to generate rollouts and the current learner policy---introduces off-policy bias that can destabilize training~\citep{lian2015asynchronous,zhou2018distributed}.
Existing convergence analyses of asynchronous SGD typically characterize staleness as a bounded delay~\citep{recht2011hogwild,lian2015asynchronous}, yet do not directly quantify how this delay interacts with the distribution shift inherent in policy-gradient methods.
Crucially, modern RLHF pipelines based on GRPO~\citep{shao2024deepseekmath} typically rely on clipped importance ratios and KL regularization rather than the dedicated off-policy corrections used in classic distributed RL, which makes the staleness--learning-rate interaction the primary lever for controlling stability.

The tension between throughput and stability becomes particularly acute in RLHF because policy-induced distribution shift compounds the staleness effect: as the learner policy evolves, rollouts generated by stale policies become progressively less representative of the current optimization landscape.
Empirical observations in asynchronous GRPO suggest a characteristic two-phase pattern---stable improvement followed by abrupt degradation---indicating that the interaction between staleness and learning rate follows structured regimes rather than arbitrary failure.
Understanding these regimes is useful for designing asynchronous RLHF systems that exploit parallelism while controlling training instability.

Scaling laws have provided foundational insights into the training dynamics of neural language models~\citep{kaplan2020scaling,hoffmann2022training}, characterizing how loss evolves with model size, data volume, and compute budget.
More recently, scaling-law formulations have been extended to incorporate learning rate schedules and optimization hyperparameters~\citep{mcandlish2018empirical}.
However, the interaction between pipeline staleness and learning rate remains less understood, particularly in policy-gradient-based RLHF.

\textbf{Contributions.} This paper develops a local theoretical framework for the staleness--learning rate interaction in asynchronous GRPO training. Our main contributions are:
\begin{enumerate}
    \item \textbf{Behavior-policy-aware formulation.} We make the behavior policy explicit in the GRPO surrogate and define a surrogate-gradient mapping $\mathcal{H}(\theta,\phi)$, separating the learner parameter $\theta$ from the rollout policy $\phi$. This avoids identifying the GRPO surrogate gradient with the total derivative of a distribution-dependent objective.
    \item \textbf{Per-step staleness bias.} Under local boundedness, distributional smoothness, and behavior-policy smoothness assumptions, we prove that the stale-rollout surrogate-gradient bias scales as $O(S\eta)$, yielding a single-step safe region controlled by $S\eta$.
    \item \textbf{Conditional collapse-time scaling.} We refine the collapse-time argument by distinguishing batch-level clipping from horizon-level learner drift. When the local stale-rollout condition is not binding, collapse follows a $T\eta$-type horizon law; otherwise, stability can depend explicitly on $S\eta$.
    \item \textbf{Two-constraint stability rule.} Combining the two regimes gives a practical condition of the form
    $\eta \ll \min\{R_{\mathrm{batch}}/(S G_{\mathrm{upd}}), R_{\mathrm{crit}}/(T G_{\mathrm{upd}})\}$, explaining why the maximum stable learning rate may appear nearly independent of $S$ only in the horizon-limited regime.
\end{enumerate}

\section{Preliminaries}
\label{sec:preliminaries}

We study Reinforcement Learning from Human Feedback (RLHF) for Large Language Models (LLMs).
Let $\pi_\theta(\cdot \mid x)$ denote an autoregressive policy parameterized by $\theta \in \mathbb{R}^d$ that maps a prompt $x$ to a distribution over completions $y$.
Throughout, $\theta$ denotes the trainable policy parameters, and all norms and constants are understood in this parameter space.

\paragraph{GRPO surrogate objective.}
Our analysis focuses on Group Relative Policy Optimization (GRPO)~\citep{shao2024deepseekmath}, a critic-free RLHF algorithm suited to high-throughput asynchronous pipelines.
For each prompt $x \sim \mathcal{D}$, GRPO samples a group of $G_{\mathrm{grp}}$ completions $y_{1:G_{\mathrm{grp}}} \sim \pi_\phi(\cdot \mid x)$ from a behavior policy $\pi_\phi$, computes per-completion rewards $r_i = r_{\mathrm{model}}(x,y_i)$, and forms group-normalized advantages
\begin{equation}
\hat{A}_i
\;=\;
\frac{r_i-\mu(r_{1:G_{\mathrm{grp}}})}{\sigma(r_{1:G_{\mathrm{grp}}})},
\qquad i=1,\ldots,G_{\mathrm{grp}}.
\label{eq:grpo-advantage}
\end{equation}
For a rollout batch generated by $\pi_\phi$, define the importance ratio
\begin{equation}
\rho_i(\theta,\phi)
\;=\;
\frac{\pi_\theta(y_i\mid x)}{\pi_\phi(y_i\mid x)} .
\label{eq:importance-ratio}
\end{equation}
The behavior-policy-dependent GRPO surrogate, with KL regularization to a reference policy $\pi_{\mathrm{ref}}$, is
\begin{equation}
\ell_{\mathrm{GRPO}}(z;\theta,\phi)
\;=\;
\frac{1}{G_{\mathrm{grp}}}
\sum_{i=1}^{G_{\mathrm{grp}}}
\Big[
\clip\!\left(\rho_i(\theta,\phi),1-\epsilon,1+\epsilon\right)\hat{A}_i
-
\beta\,\KL\!\left(\pi_\theta(\cdot\mid x)\,\|\,\pi_{\mathrm{ref}}(\cdot\mid x)\right)
\Big],
\label{eq:grpo-surrogate}
\end{equation}
where $z=(x,y_{1:G_{\mathrm{grp}}})$ is a group instance.
The explicit behavior-policy argument $\phi$ captures the denominator of the importance ratio, the clipping boundary induced by the rollout policy, and the group statistics of the sampled batch.
We reserve $G_{\mathrm{grp}}$ for the GRPO group size; gradient and update magnitude constants are denoted separately below.

\paragraph{Rollout distribution.}
Let $p(\cdot;\phi)$ denote the distribution over group instances induced by sampling $x\sim\mathcal{D}$ and $y_{1:G_{\mathrm{grp}}}\sim \pi_\phi(\cdot\mid x)$.
Because this distribution depends on the policy used to generate rollouts, the theory below works with the GRPO surrogate-gradient mapping rather than the total derivative of an objective of the form $\mathbb{E}_{z\sim p(\cdot;\theta)}[\ell(z;\theta)]$.

\paragraph{Stochastic gradients under stale rollouts.}
In practice, stochastic gradients are computed at learner parameters $\theta$ using a batch $\mathcal{B}_\phi$ collected under a possibly different behavior policy $\pi_\phi$:
\begin{equation}
g(\theta;\mathcal{B}_\phi)
\;\triangleq\;
\frac{1}{|\mathcal{B}_\phi|}
\sum_{z\in\mathcal{B}_\phi}
\nabla_\theta \ell_{\mathrm{GRPO}}(z;\theta,\phi),
\qquad z\sim p(\cdot;\phi).
\label{eq:grpo-gradient}
\end{equation}
We use gradient \emph{ascent} notation throughout, treating $\ell_{\mathrm{GRPO}}$ as a utility to be maximized.
While the notation is specialized to GRPO, the bias decomposition in Section~\ref{sec:method} applies to RLHF policy-optimization methods whose stochastic gradient has the behavior-policy-dependent form in~\eqref{eq:grpo-gradient}.

\paragraph{Synchronous vs. asynchronous updates.}
Let $\eta>0$ denote the learning rate.

\textbf{Synchronous (on-policy) update.}
Rollouts are generated by the current policy ($\phi=\theta_t$), and the learner updates
\begin{equation}
\theta_{t+1}
\;=\;
\theta_t + \eta\,g(\theta_t;\mathcal{B}_{\theta_t}).
\label{eq:sync-update}
\end{equation}

\textbf{Asynchronous (stale-rollout) update.}
To maximize throughput, rollout workers may generate trajectories using a stale snapshot $\theta_{t-k_t}$, where $k_t\ge 0$ is the staleness lag at step $t$.
The learner updates
\begin{equation}
\theta_{t+1}
\;=\;
\theta_t + \eta\,g(\theta_t;\mathcal{B}_{\theta_{t-k_t}}).
\label{eq:async-update}
\end{equation}
We denote the maximum staleness by $S\triangleq \sup_t k_t$.
Throughout this paper, $S$ is the central pipeline design knob: a larger $S$ permits higher rollout throughput at the cost of greater off-policy error.
Our goal is to quantify the bias introduced by staleness in~\eqref{eq:async-update} and derive scaling rules relating $(S,\eta)$ for both a single-step safe region and a finite training horizon.

\section{Method: The Staleness-Aware Scaling Law}
\label{sec:method}

Asynchronous SGD analyses typically assume bounded delay, but do not directly characterize how delay interacts with policy-induced distribution shift in RLHF.
In this section, we study this interaction in asynchronous GRPO.
We first define a behavior-policy-aware surrogate-gradient mapping and use it to bound the per-step \emph{staleness bias} induced by stale rollouts.
This gives a single-step scaling rule controlled by $S\eta$ (Theorem~\ref{thm:scaling}).
We then state a conditional collapse-time scaling that applies when the local stale-rollout condition is not the binding failure mode (Theorem~\ref{thm:collapse}).

\subsection{Assumptions}
\label{subsec:assumptions}

\begin{assumption}[Local bounded surrogate gradient and update]
\label{ass:bounded_grad}
Within the local training region considered by the analysis, there exist constants $G_{\mathrm{grad}},G_{\mathrm{upd}}>0$ such that for all relevant learner parameters $\theta$, behavior policies $\phi$, and group instances $z$,
\begin{equation}
\bigl\|\nabla_\theta \ell_{\mathrm{GRPO}}(z;\theta,\phi)\bigr\|
\le
G_{\mathrm{grad}},
\label{eq:bounded-grad}
\end{equation}
and the applied update direction satisfies
\begin{equation}
\bigl\|g(\theta;\mathcal{B}_\phi)\bigr\|
\le
G_{\mathrm{upd}}.
\label{eq:bounded-update}
\end{equation}
\end{assumption}

The first part controls the magnitude of the per-instance surrogate gradient; the second part controls the actual learner displacement and can be enforced by update or gradient clipping.
For plain minibatch ascent, one may take $G_{\mathrm{upd}}\le G_{\mathrm{grad}}$ under~\eqref{eq:bounded-grad}.
The boundedness assumption is used as a local condition in the stable training region rather than as a global statement about all language-model parameters.

\begin{assumption}[Local distributional smoothness]
\label{ass:lipschitz}
There exists $C_\pi>0$ such that for any behavior policies $\phi,\phi'$ in the local training region,
\begin{equation}
\KL\!\left(p(\cdot;\phi)\,\|\,p(\cdot;\phi')\right)
\le
C_\pi \|\phi-\phi'\|^2,
\label{eq:kl-smoothness}
\end{equation}
where $p(\cdot;\phi)$ denotes the distribution over group instances induced by $\pi_\phi$.
\end{assumption}

\begin{assumption}[Behavior-policy smoothness of the surrogate]
\label{ass:behavior_smooth}
There exists $L_\rho>0$ such that, for any learner parameter $\theta$ and behavior policies $\phi,\phi'$ in the local training region,
\begin{equation}
\left\|
\mathbb{E}_{z\sim p(\cdot;\phi)}
\left[
\nabla_\theta \ell_{\mathrm{GRPO}}(z;\theta,\phi)
-
\nabla_\theta \ell_{\mathrm{GRPO}}(z;\theta,\phi')
\right]
\right\|
\le
L_\rho\|\phi-\phi'\|.
\label{eq:behavior-smoothness}
\end{equation}
\end{assumption}

Assumption~\ref{ass:behavior_smooth} captures the local sensitivity of the importance-ratio denominator, clipping boundary, and group-normalized advantages to the behavior policy.
It is not intended as a global smoothness claim; it is the regime in which the stale-rollout analysis is meant to be applied.

\begin{remark}[Relevance to GRPO]
\label{rem:relevance}
The KL regularizer in GRPO, either with a fixed coefficient $\beta$ or through a target-KL controller, keeps successive policies locally close in the stable training regime, supporting the local nature of Assumption~\ref{ass:lipschitz}.
All norms and constants are understood in the policy's trainable-parameter space.
\end{remark}

\paragraph{Pipeline convention.}
We follow the asynchronous-GRPO pipeline in which a rollout batch generated by $\pi_{\theta_\tau}$ is consumed by the learner for $S$ consecutive optimization steps before being refreshed.
Thus the policy lag $k_t$ at learner step $t$ ranges from $0$ to $S-1$ within each rollout cycle, and $S$ is both the maximum staleness and the rollout reuse factor.
Consequently, the $M$-th rollout batch corresponds approximately to learner steps $t\in\{(M-1)S,\dots,MS-1\}$, so
\begin{equation}
t\approx MS.
\label{eq:rollout-to-step}
\end{equation}

\subsection{Staleness as a Controlled Bias}
\label{subsec:bias}

Let $\theta_t$ be the learner policy at optimization step $t$ and let
\begin{equation}
\phi_t \triangleq \theta_{t-k_t}
\label{eq:phi-t}
\end{equation}
be the behavior policy used to generate the batch consumed at step $t$.
Define the population GRPO surrogate-gradient mapping
\begin{equation}
\mathcal{H}(\theta,\phi)
\triangleq
\mathbb{E}_{z\sim p(\cdot;\phi)}
\left[
\nabla_\theta \ell_{\mathrm{GRPO}}(z;\theta,\phi)
\right],
\label{eq:H-def}
\end{equation}
and the on-policy surrogate-gradient mapping
\begin{equation}
\mathcal{H}_{\mathrm{on}}(\theta)
\triangleq
\mathcal{H}(\theta,\theta).
\label{eq:H-on-def}
\end{equation}
These quantities are the gradients actually used by the GRPO surrogate update; they should not be confused with the total derivative of a distribution-dependent objective.

Taking expectation over batch sampling,
\begin{equation}
\mathbb{E}\!\left[g(\theta_t;\mathcal{B}_{\phi_t})\right]
=
\mathcal{H}(\theta_t,\phi_t).
\label{eq:expected-async}
\end{equation}
Adding and subtracting the on-policy surrogate gradient gives
\begin{equation}
\mathbb{E}\!\left[g(\theta_t;\mathcal{B}_{\phi_t})\right]
=
\mathcal{H}_{\mathrm{on}}(\theta_t)+\delta_t,
\label{eq:decomposition}
\end{equation}
where
\begin{equation}
\delta_t
\triangleq
\mathcal{H}(\theta_t,\phi_t)-\mathcal{H}(\theta_t,\theta_t).
\label{eq:delta-def}
\end{equation}
The term $\delta_t$ is the staleness bias: it captures the systematic difference between optimizing on stale rollout data and optimizing on data sampled from the current policy, including both distribution shift and behavior-policy dependence of the surrogate.

\subsection{Bounding the Staleness Bias}
\label{subsec:bound}

\begin{lemma}[Per-step staleness bias]
\label{lem:bias}
Under Assumptions~\ref{ass:bounded_grad}--\ref{ass:behavior_smooth},
\begin{equation}
\|\delta_t\|
\le
C_{\mathrm{stale}}\|\theta_t-\phi_t\|,
\label{eq:bias-bound}
\end{equation}
where
\begin{equation}
C_{\mathrm{stale}}
\triangleq
L_\rho + 2G_{\mathrm{grad}}\sqrt{C_\pi/2}.
\label{eq:cstale-def}
\end{equation}
\end{lemma}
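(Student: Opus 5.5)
The plan is to split $\delta_t$ into a behavior-policy term and a distribution-shift term by adding and subtracting $\mathbb{E}_{z\sim p(\cdot;\phi_t)}[\nabla_\theta \ell_{\mathrm{GRPO}}(z;\theta_t,\theta_t)]$. Writing $\phi=\phi_t$ and $\theta=\theta_t$, this gives
\begin{equation*}
\delta_t
=
\underbrace{\mathbb{E}_{z\sim p(\cdot;\phi)}\bigl[\nabla_\theta \ell_{\mathrm{GRPO}}(z;\theta,\phi)-\nabla_\theta \ell_{\mathrm{GRPO}}(z;\theta,\theta)\bigr]}_{(\mathrm{I})}
+
\underbrace{\mathbb{E}_{z\sim p(\cdot;\phi)}\bigl[\nabla_\theta \ell_{\mathrm{GRPO}}(z;\theta,\theta)\bigr]-\mathbb{E}_{z\sim p(\cdot;\theta)}\bigl[\nabla_\theta \ell_{\mathrm{GRPO}}(z;\theta,\theta)\bigr]}_{(\mathrm{II})}.
\end{equation*}
The triangle inequality then reduces the lemma to bounding each piece by a constant times $\|\theta-\phi\|$.

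Term (I) has exactly the form controlled by Assumption~\ref{ass:behavior_smooth}. We apply it with behavior policies $\phi$ and $\phi'=\theta$, both of which lie in the local region, and get $\|(\mathrm{I})\|\le L_\rho\|\phi-\theta\|$ directly.

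Term (II) is the difference of the expectations of one fixed vector-valued function $f(z)=\nabla_\theta\ell_{\mathrm{GRPO}}(z;\theta,\theta)$ under two distributions. By \eqref{eq:bounded-grad}, $\|f(z)\|\le G_{\mathrm{grad}}$ for all $z$. The bound proceeds in three steps:
\begin{enumerate}
\item For a bounded vector function, $\|\mathbb{E}_P f-\mathbb{E}_Q f\|\le 2\sup_z\|f(z)\|\,\mathrm{TV}(P,Q)$. To see this, write the difference as $\int f\,d(P-Q)$, use $\|\int f\,d(P-Q)\|\le \int\|f\|\,d|P-Q|$, and note that $|P-Q|$ has total mass $2\,\mathrm{TV}$.
\item Pinsker's inequality gives $\mathrm{TV}(p(\cdot;\phi),p(\cdot;\theta))\le\sqrt{\tfrac12\KL(p(\cdot;\phi)\,\|\,p(\cdot;\theta))}$. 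Pinsker holds in either KL direction, so the orientation matches whatever Assumption~\ref{ass:lipschitz} supplies.
\item Assumption~\ref{ass:lipschitz} with $\phi'=\theta$ bounds the KL by $C_\pi\|\phi-\theta\|^2$.
\end{enumerate}
Chaining these gives $\|(\mathrm{II})\|\le 2G_{\mathrm{grad}}\sqrt{C_\pi/2}\,\|\phi-\theta\|$. Adding this to the bound on (I) yields $C_{\mathrm{stale}}=L_\rho+2G_{\mathrm{grad}}\sqrt{C_\pi/2}$, as claimed.

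The main obstacle is term (II), and specifically making sure the constant comes out as $2G_{\mathrm{grad}}\sqrt{C_\pi/2}$ rather than $G_{\mathrm{grad}}\sqrt{2C_\pi}$ or some other variant. This is bookkeeping about the TV convention: I take TV as $\sup_A|P(A)-Q(A)|$, which makes the vector-valued bound carry the factor $2$. I must also check that all three points $\theta_t$, $\phi_t$ and the mixed argument used in (I) stay in the local training region, which the assumptions presuppose. Finally, measurability and integrability of $f$ are implicit in the finiteness of the expectations and follow from the uniform bound $G_{\mathrm{grad}}$.
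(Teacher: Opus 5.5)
Your proposal is correct and matches the paper's proof step for step. It uses the same add-and-subtract of $\mathbb{E}_{z\sim p(\cdot;\phi_t)}[\nabla_\theta \ell_{\mathrm{GRPO}}(z;\theta_t,\theta_t)]$, bounds the behavior-policy term by Assumption~\ref{ass:behavior_smooth}, and bounds the distribution-shift term by $2G_{\mathrm{grad}}\,\mathrm{TV}$ followed by Pinsker and Assumption~\ref{ass:lipschitz}, giving the same $C_{\mathrm{stale}}$; your justification of the factor $2$ through the total mass of $|P-Q|$ simply spells out a step the paper leaves implicit.
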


\begin{proof}
Using $\phi_t=\theta_{t-k_t}$, decompose
\begin{align}
\delta_t
&=
\mathbb{E}_{z\sim p(\cdot;\phi_t)}
\left[\nabla_\theta \ell_{\mathrm{GRPO}}(z;\theta_t,\phi_t)\right]
-
\mathbb{E}_{z\sim p(\cdot;\theta_t)}
\left[\nabla_\theta \ell_{\mathrm{GRPO}}(z;\theta_t,\theta_t)\right]\\
&=
\underbrace{
\mathbb{E}_{z\sim p(\cdot;\phi_t)}
\left[
\nabla_\theta \ell_{\mathrm{GRPO}}(z;\theta_t,\phi_t)
-
\nabla_\theta \ell_{\mathrm{GRPO}}(z;\theta_t,\theta_t)
\right]
}_{\text{behavior-policy surrogate shift}}\\
&\quad+
\underbrace{
\mathbb{E}_{z\sim p(\cdot;\phi_t)}
\left[\nabla_\theta \ell_{\mathrm{GRPO}}(z;\theta_t,\theta_t)\right]
-
\mathbb{E}_{z\sim p(\cdot;\theta_t)}
\left[\nabla_\theta \ell_{\mathrm{GRPO}}(z;\theta_t,\theta_t)\right]
}_{\text{rollout distribution shift}}.
\end{align}
The first term is bounded by $L_\rho\|\theta_t-\phi_t\|$ by Assumption~\ref{ass:behavior_smooth}.
For the second term, let $h(z)=\nabla_\theta\ell_{\mathrm{GRPO}}(z;\theta_t,\theta_t)$.
By Assumption~\ref{ass:bounded_grad}, $\|h(z)\|\le G_{\mathrm{grad}}$.
Therefore, for $P=p(\cdot;\phi_t)$ and $Q=p(\cdot;\theta_t)$,
\begin{equation}
\|\mathbb{E}_P[h(z)]-\mathbb{E}_Q[h(z)]\|
\le
2G_{\mathrm{grad}}\,\mathrm{TV}(P,Q).
\end{equation}
Pinsker's inequality and Assumption~\ref{ass:lipschitz} give
\begin{equation}
\mathrm{TV}\!\left(p(\cdot;\phi_t),p(\cdot;\theta_t)\right)
\le
\sqrt{\frac{1}{2}\KL\!\left(p(\cdot;\phi_t)\,\|\,p(\cdot;\theta_t)\right)}
\le
\sqrt{C_\pi/2}\,\|\theta_t-\phi_t\|.
\end{equation}
Combining the two bounds yields~\eqref{eq:bias-bound}.
\end{proof}

\subsection{Single-Step Staleness Scaling}
\label{subsec:scaling}

We next relate the stale-policy drift $\|\theta_t-\phi_t\|$ to the learning rate and policy lag.
From the asynchronous update rule and Assumption~\ref{ass:bounded_grad},
\begin{equation}
\|\theta_t-\phi_t\|
=
\|\theta_t-\theta_{t-k_t}\|
\le
\sum_{j=1}^{k_t}
\eta\,\bigl\|g(\theta_{t-j};\mathcal{B}_{\theta_{t-j-k_{t-j}}})\bigr\|
\le
k_t\eta G_{\mathrm{upd}}
\le
S\eta G_{\mathrm{upd}}.
\label{eq:drift}
\end{equation}
Combining Lemma~\ref{lem:bias} with~\eqref{eq:drift} gives the main single-step scaling law.

\begin{theorem}[Single-step staleness scaling]
\label{thm:scaling}
Under Assumptions~\ref{ass:bounded_grad}--\ref{ass:behavior_smooth}, the staleness bias at step $t$ satisfies
\begin{equation}
\|\delta_t\|
\le
C_{\mathrm{stale}}k_t\eta G_{\mathrm{upd}}
\le
C_{\mathrm{stale}}S\eta G_{\mathrm{upd}}.
\label{eq:linear-scaling}
\end{equation}
Consequently, the asynchronous GRPO update can be written as
\begin{equation}
\theta_{t+1}
=
\theta_t
+
\eta\left(
\mathcal{H}_{\mathrm{on}}(\theta_t)+\xi_t+\delta_t
\right),
\label{eq:effective-update}
\end{equation}
where
\begin{equation}
\xi_t
\triangleq
 g(\theta_t;\mathcal{B}_{\phi_t})
-
\mathcal{H}(\theta_t,\phi_t)
\label{eq:noise-def}
\end{equation}
is a zero-mean sampling noise term conditional on $(\theta_t,\phi_t)$, and $\delta_t$ obeys~\eqref{eq:linear-scaling}.
\end{theorem}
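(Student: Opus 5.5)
The plan is to obtain the bound \eqref{eq:linear-scaling} by chaining Lemma~\ref{lem:bias} with the drift estimate \eqref{eq:drift}. The representation \eqref{eq:effective-update} then follows by adding and subtracting the relevant expectations in the update rule \eqref{eq:async-update}.

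\textbf{Step 1: the drift bound.} Unrolling \eqref{eq:async-update} gives the telescoping identity
\[
\theta_t-\theta_{t-k_t}=\eta\sum_{j=1}^{k_t} g\bigl(\theta_{t-j};\mathcal{B}_{\phi_{t-j}}\bigr).
\]
The triangle inequality together with the update bound \eqref{eq:bounded-update} yields $\|\theta_t-\phi_t\|\le k_t\eta G_{\mathrm{upd}}$. Since $k_t\le S$ by definition of $S$, this is further bounded by $S\eta G_{\mathrm{upd}}$, which reproduces \eqref{eq:drift}.

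The one point needing care is that Lemma~\ref{lem:bias} and Assumption~\ref{ass:bounded_grad} are local. I would therefore state explicitly that all iterates $\theta_{t-k_t},\dots,\theta_t$ lie in the local training region, so that the assumptions apply along the whole window. I expect this to be the main obstacle. Without an a priori confinement, the bound is circular: one needs the iterates in the region in order to bound the drift, and the drift bound to keep the iterates in the region. I would handle it as the paper does elsewhere, by treating containment in the local region as part of the standing hypothesis. Alternatively, if the region is a ball of radius at least $S\eta G_{\mathrm{upd}}$ around $\phi_t$, a short induction on $j$ shows each partial sum stays inside it.

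\textbf{Step 2: the bias bound.} Apply Lemma~\ref{lem:bias} with $\phi_t=\theta_{t-k_t}$ to get $\|\delta_t\|\le C_{\mathrm{stale}}\|\theta_t-\phi_t\|$. Substituting the bound from Step 1 gives both inequalities in \eqref{eq:linear-scaling}.

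\textbf{Step 3: the decomposition.} Define $\xi_t$ by \eqref{eq:noise-def}. Then
\[
g(\theta_t;\mathcal{B}_{\phi_t})=\mathcal{H}(\theta_t,\phi_t)+\xi_t=\mathcal{H}_{\mathrm{on}}(\theta_t)+\delta_t+\xi_t,
\]
where the second equality uses \eqref{eq:delta-def}. Substituting into \eqref{eq:async-update} gives \eqref{eq:effective-update}.

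For the zero-mean property, condition on $(\theta_t,\phi_t)$. Given these, the batch $\mathcal{B}_{\phi_t}$ consists of instances drawn i.i.d.\ from $p(\cdot;\phi_t)$, independently of how $\theta_t$ was produced; this is the implicit sampling model behind \eqref{eq:expected-async}. By linearity of expectation over the batch average in \eqref{eq:grpo-gradient},
\[
\mathbb{E}\bigl[g(\theta_t;\mathcal{B}_{\phi_t})\mid\theta_t,\phi_t\bigr]=\mathcal{H}(\theta_t,\phi_t),
\]
so $\mathbb{E}[\xi_t\mid\theta_t,\phi_t]=0$.

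A caveat applies under the pipeline convention, where one batch is reused for $S$ consecutive steps. In that case $\theta_t$ itself depends on the batch, so conditional unbiasedness holds exactly only at the first step of each cycle. I would note this, and state the zero-mean claim under the paper's idealization that each step's batch is a fresh draw from $p(\cdot;\phi_t)$.
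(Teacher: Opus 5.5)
Your proposal is correct and follows the paper's route: chain Lemma~\ref{lem:bias} with the telescoped drift bound \eqref{eq:drift}, then add and subtract $\mathcal{H}(\theta_t,\phi_t)$ and $\mathcal{H}_{\mathrm{on}}(\theta_t)$ in the update rule \eqref{eq:async-update} to obtain the decomposition. Your two caveats are legitimate refinements of points the paper leaves implicit. First, the iterates over the whole lag window must stay in the local region. Second, the conditional zero-mean property of $\xi_t$ holds exactly only when each step uses a fresh or disjoint minibatch, not when the identical batch is reused.
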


\paragraph{Single-step implication.}
The bias in the gradient estimator scales as $O(S\eta)$, and the staleness-induced perturbation to a single parameter update scales as
\begin{equation}
\eta\|\delta_t\|
=
O(S\eta^2).
\label{eq:single-step-perturbation}
\end{equation}
Keeping the stale-rollout drift small relative to the local trust region gives a single-step safe region of the form
\begin{equation}
S\eta \lesssim C_{\mathrm{safe}},
\label{eq:safe-region}
\end{equation}
where $C_{\mathrm{safe}}$ absorbs the local trust-region radius, target-KL budget, curvature, and effective update scale.

\subsection{Collapse-Time Scaling}
\label{subsec:collapse}

Theorem~\ref{thm:scaling} bounds the per-step bias, but it does not by itself determine when an unstable configuration will fail.
Empirically, high-learning-rate configurations can train normally for several rollout batches and then collapse abruptly (Figures~\ref{fig:metrics_grid_1b} and~\ref{fig:metrics_grid_3b}).
We now separate two mechanisms: batch-level stale-rollout clipping and horizon-level learner drift.

\paragraph{Batch-level clipping radius.}
Consider a rollout batch generated by $\pi_\phi$ and consumed under learner policy $\pi_\theta$.
The clipped importance ratio $\clip(\rho_i(\theta,\phi),1-\epsilon,1+\epsilon)$ provides a meaningful policy-gradient signal only while a sufficient fraction of sampled ratios remain inside, or close to, the clipping interval.
Let $R_{\mathrm{batch}}(\epsilon)$ denote a local radius such that the batch-level clipped-ratio signal remains non-degenerate whenever
\begin{equation}
\|\theta-\phi\|
\ll
R_{\mathrm{batch}}(\epsilon).
\label{eq:rbatch-def}
\end{equation}
Within one rollout reuse cycle, Eq.~\eqref{eq:drift} gives $\|\theta_t-\phi_t\|\le S\eta G_{\mathrm{upd}}$.
Thus batch-level stale-rollout clipping is controlled by $S\eta$, not by the cumulative number of learner steps since initialization.

\paragraph{Horizon-level learner drift.}
Suppose the local stale-rollout condition
\begin{equation}
S\eta G_{\mathrm{upd}}
\ll
R_{\mathrm{batch}}(\epsilon)
\label{eq:local-stale-condition}
\end{equation}
is satisfied, so within-cycle clipping caused by stale rollouts is not the binding failure mode.
Training may still collapse when the learner leaves a broader local surrogate-validity region of radius $R_{\mathrm{crit}}$.
This radius summarizes the range over which the GRPO surrogate, local curvature, and reference-policy regularization continue to provide a reliable optimization signal.
Under the update bound in Assumption~\ref{ass:bounded_grad},
\begin{equation}
\|\theta_t-\theta_0\|
\le
\sum_{s=0}^{t-1}\eta\,\bigl\|g(\theta_s;\mathcal{B}_{\phi_s})\bigr\|
\le
t\eta G_{\mathrm{upd}}.
\label{eq:total-drift}
\end{equation}
Therefore, reaching a horizon-level radius $R_{\mathrm{crit}}$ requires $t\eta G_{\mathrm{upd}}\gtrsim R_{\mathrm{crit}}$.

\begin{theorem}[Conditional collapse-time scaling, informal]
\label{thm:collapse}
Assume Assumptions~\ref{ass:bounded_grad}--\ref{ass:behavior_smooth} and the local stale-rollout condition~\eqref{eq:local-stale-condition}.
If collapse is governed by the learner leaving a surrogate-validity region of radius $R_{\mathrm{crit}}$, then the first collapse step satisfies
\begin{equation}
t_{\mathrm{collapse}}\cdot\eta
\gtrsim
\frac{R_{\mathrm{crit}}}{G_{\mathrm{upd}}}.
\label{eq:t-collapse-bound}
\end{equation}
Equivalently, in rollout-batch units, using $t_{\mathrm{collapse}}\approx M_{\mathrm{collapse}}S$,
\begin{equation}
M_{\mathrm{collapse}}S\eta
\gtrsim
\frac{R_{\mathrm{crit}}}{G_{\mathrm{upd}}}.
\label{eq:m-collapse-bound}
\end{equation}
If $S\eta G_{\mathrm{upd}}$ is comparable to or larger than $R_{\mathrm{batch}}(\epsilon)$, collapse can instead be governed by within-cycle stale-rollout clipping, and the learner-step collapse time may depend explicitly on $S$.
\end{theorem}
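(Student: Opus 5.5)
The argument is a first-exit (hitting-time) bound driven by the cumulative drift estimate~\eqref{eq:total-drift}. We make precise the hypothesis that collapse is governed by leaving a surrogate-validity region. Define
\[
t_{\mathrm{collapse}} \triangleq \inf\{t\ge 0:\ \|\theta_t-\theta_0\|\ge R_{\mathrm{crit}}\},
\]
where the ball of radius $R_{\mathrm{crit}}$ around $\theta_0$ lies inside the local training region of Assumptions~\ref{ass:bounded_grad}--\ref{ass:behavior_smooth}. By hypothesis, no collapse occurs while the iterate stays inside this ball and the local stale-rollout condition~\eqref{eq:local-stale-condition} holds.

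\textbf{Step 1: the bounds apply up to exit.} For every $s<t_{\mathrm{collapse}}$, the iterates $\theta_s$ and behavior policies $\phi_s=\theta_{s-k_s}$ lie in the ball, hence in the local region. So Assumption~\ref{ass:bounded_grad} gives $\|g(\theta_s;\mathcal{B}_{\phi_s})\|\le G_{\mathrm{upd}}$. This is a stopping-time argument: the bounds are only invoked before the exit time.

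\textbf{Step 2: bound the exit time.} Applying the triangle inequality along the trajectory, as in~\eqref{eq:total-drift}, gives
\[
R_{\mathrm{crit}}\le \|\theta_{t_{\mathrm{collapse}}}-\theta_0\|\le t_{\mathrm{collapse}}\,\eta G_{\mathrm{upd}}.
\]
Rearranging yields~\eqref{eq:t-collapse-bound}. Only the final step, from $t_{\mathrm{collapse}}-1$ to $t_{\mathrm{collapse}}$, uses a gradient evaluated at an iterate still inside the ball, so the bound is valid.

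\textbf{Step 3: rollout-batch units.} The pipeline convention~\eqref{eq:rollout-to-step} gives $t_{\mathrm{collapse}}\in[(M_{\mathrm{collapse}}-1)S,\,M_{\mathrm{collapse}}S)$. Hence $M_{\mathrm{collapse}}S\ge t_{\mathrm{collapse}}$, and~\eqref{eq:m-collapse-bound} follows from Step 2. This direction is exact; no approximation is needed.

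\textbf{Step 4: role of~\eqref{eq:local-stale-condition} and the converse regime.} Condition~\eqref{eq:local-stale-condition} is used only to rule out the competing failure mode. By~\eqref{eq:drift}, within-cycle drift satisfies $\|\theta_t-\phi_t\|\le S\eta G_{\mathrm{upd}}\ll R_{\mathrm{batch}}(\epsilon)$, so the clipped-ratio signal stays non-degenerate before $t_{\mathrm{collapse}}$. Theorem~\ref{thm:scaling} then keeps the bias $\|\delta_t\|$ at $O(S\eta)$, which is small, so it cannot itself trigger failure.

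For the final sentence of the theorem, we only need to observe the following. If $S\eta G_{\mathrm{upd}}\gtrsim R_{\mathrm{batch}}(\epsilon)$, the clipping radius can be reached within a single reuse cycle, after $k_t\lesssim S$ steps, independently of the cumulative drift. The first failure time is then bounded in terms of the first cycle in which $k_t\eta G_{\mathrm{upd}}$ reaches $R_{\mathrm{batch}}$, and this explicitly involves $S$. This claim is qualitative ("can"), so exhibiting the mechanism suffices.

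\textbf{Main obstacle.} The main difficulty is the informal status of "collapse." The bound~\eqref{eq:t-collapse-bound} is rigorous only once collapse is identified with, or implies, exit from the $R_{\mathrm{crit}}$-ball. It must also be checked that the assumptions are applied only to iterates before that exit, which Step 1 ensures. I would state this identification explicitly as the modeling hypothesis and present the result as a lower bound on the first-exit time.
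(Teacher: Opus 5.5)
Your proposal is correct and follows the paper's own argument. You apply the triangle-inequality bound $\|\theta_t-\theta_0\|\le t\eta G_{\mathrm{upd}}$ at the first exit from the $R_{\mathrm{crit}}$-ball, convert to rollout-batch units via $t_{\mathrm{collapse}}\approx M_{\mathrm{collapse}}S$, and use the local stale-rollout condition only to exclude the competing clipping failure mode; your stopping-time framing and the exact inequality $t_{\mathrm{collapse}}<M_{\mathrm{collapse}}S$ simply make rigorous what the paper states informally (one wording slip: it is not \emph{only} the final step that uses an in-ball gradient---every step $s<t_{\mathrm{collapse}}$ does, which is exactly what Step~1 guarantees).
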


\paragraph{When $S$ does not appear in the learner-step horizon.}
Theorem~\ref{thm:scaling} and Theorem~\ref{thm:collapse} describe different quantities.
Theorem~\ref{thm:scaling} bounds the direction error of an individual gradient estimator and therefore produces an $S\eta$ condition.
Theorem~\ref{thm:collapse} describes the horizon-level learner displacement after the $S\eta$ condition is already controlled.
In that horizon-limited regime, increasing $S$ mainly changes how many rollout batches correspond to a fixed number of learner steps: a larger $S$ packs more learner steps into each rollout batch, so collapse can occur at a smaller batch count $M_{\mathrm{collapse}}$ even if the learner-step horizon is similar.

\paragraph{Two-tier picture.}
Combining the two results gives the following interpretation:
\begin{itemize}
    \item \textbf{Local stale-rollout condition:} $S\eta G_{\mathrm{upd}}\ll R_{\mathrm{batch}}(\epsilon)$ keeps within-cycle stale-rollout bias and clip saturation controlled.
    \item \textbf{Horizon-level drift condition:} $t\eta G_{\mathrm{upd}}\lesssim R_{\mathrm{crit}}$, equivalently $MS\eta G_{\mathrm{upd}}\lesssim R_{\mathrm{crit}}$, keeps the cumulative learner drift inside the surrogate-validity region.
\end{itemize}
The first condition can make the stable learning rate decrease with $S$; the second can make the observed threshold appear nearly independent of $S$ when it is the active constraint.

\subsection{Practical Stability Rule}
\label{subsec:practical}

For a target training horizon of $T$ learner steps, a configuration $(S,\eta)$ should satisfy both a local stale-rollout condition and a horizon-level drift condition:
\begin{equation}
S\eta
\ll
\frac{R_{\mathrm{batch}}(\epsilon)}{G_{\mathrm{upd}}},
\qquad
T\eta
\ll
\frac{R_{\mathrm{crit}}}{G_{\mathrm{upd}}}.
\label{eq:two-conditions}
\end{equation}
Equivalently,
\begin{equation}
\eta
\ll
\min\left\{
\frac{R_{\mathrm{batch}}(\epsilon)}{S G_{\mathrm{upd}}},
\frac{R_{\mathrm{crit}}}{T G_{\mathrm{upd}}}
\right\}.
\label{eq:practical-rule}
\end{equation}
The apparent dependence of the maximum stable learning rate on $S$ depends on which constraint is active.
When the horizon-level condition is binding,
\begin{equation}
\frac{R_{\mathrm{crit}}}{T G_{\mathrm{upd}}}
\ll
\frac{R_{\mathrm{batch}}(\epsilon)}{S G_{\mathrm{upd}}},
\label{eq:horizon-binding}
\end{equation}
the maximum stable learning rate is approximately independent of $S$, and increasing $S$ primarily reduces the number of rollout batches before collapse through $M_{\mathrm{collapse}}S\eta\gtrsim R_{\mathrm{crit}}/G_{\mathrm{upd}}$.
When the local stale-rollout condition is binding, however, the stable learning rate decreases as $1/S$.

\paragraph{Scope.}
This analysis does not claim that stale rollouts are harmless.
It separates two effects: the per-step stale-rollout bias controlled by $S\eta$, and the horizon-level learner drift controlled by $T\eta$.
The empirical observation that the learning-rate threshold can be weakly dependent on $S$ should therefore be interpreted as evidence for the horizon-limited regime, not as a general guarantee that staleness never affects stability.

\section{Experiments}
\label{sec:experiments}

Our experiments are designed to test the two-tier picture of Section~\ref{sec:method} term by term, rather than merely to demonstrate that staleness can destabilize training.
Each of the three result subsections below targets one prediction of the theory and follows the same \textbf{Prediction $\rightarrow$ Observation $\rightarrow$ Takeaway} structure, so that every empirical claim is tied to a specific theorem or equation.
Table~\ref{tab:roadmap} summarizes this correspondence.

\begin{table}[htbp]
\centering
\caption{Mapping from theory to experiments. Each result subsection isolates one prediction of the staleness-aware scaling law.}
\label{tab:roadmap}
\small
\begin{tabular}{p{0.30\linewidth} p{0.34\linewidth} p{0.28\linewidth}}
\toprule
\textbf{Theoretical prediction} & \textbf{Empirical question} & \textbf{Evidence} \\
\midrule
Local stale-rollout condition $S\eta\lesssim R_{\mathrm{batch}}/G_{\mathrm{upd}}$ (Thm.~\ref{thm:scaling}, Eq.~\eqref{eq:safe-region}) & Does the maximum stable learning rate fall as $\eta_{\max}\!\propto\!1/S$? & \S\ref{subsec:phase}, Figs.~\ref{fig:metrics_grid_1b}--\ref{fig:metrics_grid_3b} \\
\addlinespace
Horizon-limited collapse $t_{\mathrm{collapse}}\eta\gtrsim R_{\mathrm{crit}}/G_{\mathrm{upd}}$ (Thm.~\ref{thm:collapse}, Eqs.~\eqref{eq:t-collapse-bound}--\eqref{eq:m-collapse-bound}) & For collapsing runs, is the collapse horizon set by $T\eta$ and independent of $S$? & \S\ref{subsec:escape}, Table~\ref{tab:collapse} \\
\addlinespace
Update decomposition $g=\mathcal{H}_{\mathrm{on}}+\xi_t+\delta_t$ with $\|\delta_t\|=O(S\eta)$ (Thm.~\ref{thm:scaling}, Eq.~\eqref{eq:effective-update}) & Do the stable/unstable regimes correspond to diffusive/ballistic drift? & \S\ref{subsec:coherence}, Fig.~\ref{fig:smalllr_3b} \\
\bottomrule
\end{tabular}
\end{table}

\subsection{Datasets, Models, and Evaluation Metrics}
\label{subsec:setup}

\paragraph{Task and reward.}
We train on a mathematical-reasoning task with a rule-based, verifiable reward: a completion receives reward $1$ if its final answer matches the ground truth and $0$ otherwise.
This binary, low-variance signal isolates the optimization dynamics of interest from reward-model noise, so that any collapse can be attributed to the staleness--learning-rate interaction rather than to reward hacking.
We hold out a fixed validation split that is never used for rollouts.

\paragraph{Models.}
We use two instruction-tuned policies of different scale, \texttt{Llama-3.2-1B-Instruct} and \texttt{Llama-3.2-3B-Instruct}, optimized with GRPO~\citep{shao2024deepseekmath}.
Following the convention of Section~\ref{sec:preliminaries}, all parameter norms, drift quantities, and constants are measured in the policy's trainable-parameter space.
A KL penalty to the SFT reference policy keeps successive policies locally close, consistent with the local-smoothness regime of Assumptions~\ref{ass:lipschitz}--\ref{ass:behavior_smooth}.

\paragraph{Asynchronous pipeline.}
We adopt the rollout-reuse pipeline of Section~\ref{sec:method}: a rollout batch generated by the snapshot $\pi_{\theta_\tau}$ is consumed by the learner for $S$ consecutive optimization steps before the rollout workers refresh their weights.
A \emph{weight synchronization} (or \emph{weight sync}) thus marks the boundary between consecutive rollout cycles, and the $M$-th weight sync occurs at learner step $t\approx MS$ (Eq.~\eqref{eq:rollout-to-step}).
We sweep the maximum staleness $S\in\{8,16,32\}$ against the learning rate $\eta\in\{1\!\times\!10^{-6},\,5\!\times\!10^{-7},\,2\!\times\!10^{-7},\,1\!\times\!10^{-7}\}$, holding all other hyperparameters (group size, batch size, clip range $\epsilon$, KL coefficient $\beta$) fixed across runs so that $(S,\eta)$ is the only varying factor.

\paragraph{Evaluation metrics.}
We track three quantities, each chosen to probe a specific term in the theory.
\begin{itemize}
    \item \textbf{Training Reward}: the mean group reward per optimization step. A collapse manifests as the reward dropping to and remaining at zero; this is our operational definition of instability.
    \item \textbf{Gradient Cosine Similarity} (Grad CosSim): the cosine similarity between consecutive applied update directions, $\cos\!\big(g(\theta_{t};\mathcal{B}_{\phi_{t}}),\,g(\theta_{t-1};\mathcal{B}_{\phi_{t-1}})\big)$. This is our direct probe of \emph{which term dominates the update} in Eq.~\eqref{eq:effective-update}: a persistently high value indicates a coherent direction (the staleness bias $\delta_t$), whereas a value near zero indicates that the zero-mean sampling noise $\xi_t$ dominates and successive steps decorrelate.
    \item \textbf{Validation Reward}: the reward on the held-out split, evaluated periodically, used to confirm that a training-reward collapse reflects a genuine loss of the optimization signal rather than a metric artifact.
\end{itemize}

\subsection{Staleness Lowers the Maximum Stable Learning Rate}
\label{subsec:phase}

\paragraph{Theory prediction.}
The single-step safe region $S\eta\lesssim C_{\mathrm{safe}}$ of Eq.~\eqref{eq:safe-region}, equivalently the first term $S\eta\ll R_{\mathrm{batch}}/G_{\mathrm{upd}}$ of the two-constraint rule~\eqref{eq:practical-rule}, predicts that the maximum stable learning rate should scale inversely with staleness, $\eta_{\max}\propto 1/S$, so that the product $S\,\eta_{\max}$ is approximately constant.

\paragraph{Observation.}
Figures~\ref{fig:metrics_grid_1b} and~\ref{fig:metrics_grid_3b} sweep $(S,\eta)$ for the 1B and 3B policies.
Reading across each row (fixed $\eta$, increasing $S$) and down each column (fixed $S$, decreasing $\eta$) reveals a clear monotone phase structure: \emph{the larger the staleness, the smaller the learning rate required to maintain stability.}
At the largest learning rate $\eta=10^{-6}$, every staleness setting eventually collapses to zero reward.
As $\eta$ is reduced, the stable region expands from large-$S$ to small-$S$ configurations, and at $\eta=10^{-7}$ the small- and moderate-staleness runs ($S=8,16$) converge stably while only $S=32$ remains unstable.
Quantitatively, the stability boundary $\eta_{\max}(S)$ moves down by roughly a factor of two each time $S$ doubles: $S=8$ is stable up to $\eta=2\!\times\!10^{-7}$, $S=16$ up to $\eta=1\!\times\!10^{-7}$, and $S=32$ collapses even at $\eta=10^{-7}$.
The product $S\,\eta_{\max}\approx 1.6\!\times\!10^{-6}$ is invariant across the three staleness levels, and the same trend holds at both model scales.

\begin{takeaway}
The maximum stable learning rate obeys $\eta_{\max}\propto 1/S$ ($S\,\eta_{\max}\approx 1.6\!\times\!10^{-6}$), confirming that the \emph{stability boundary} is set by the local stale-rollout constraint $S\eta\lesssim R_{\mathrm{batch}}/G_{\mathrm{upd}}$ (Thm.~\ref{thm:scaling}). Staleness and learning rate trade off through their product $S\eta$, not independently.
\end{takeaway}

\subsection{An Escape-Time Scaling Law: \texorpdfstring{$M_{\mathrm{collapse}}\,S\,\eta\approx\text{const}$}{M S eta approx const}}
\label{subsec:escape}

\paragraph{Theory prediction.}
Once a run is in the unstable regime, Theorem~\ref{thm:collapse} predicts that collapse occurs when the cumulative learner drift reaches the surrogate-validity radius $R_{\mathrm{crit}}$, i.e.\ $t_{\mathrm{collapse}}\,\eta\gtrsim R_{\mathrm{crit}}/G_{\mathrm{upd}}$ (Eq.~\eqref{eq:t-collapse-bound}).
In learner steps this horizon depends only on $\eta$ and is \emph{independent of $S$}; in rollout-batch units it becomes $M_{\mathrm{collapse}}\,S\,\eta\gtrsim R_{\mathrm{crit}}/G_{\mathrm{upd}}$ (Eq.~\eqref{eq:m-collapse-bound}).

\paragraph{Observation.}
We record $M_{\mathrm{collapse}}$, the index of the weight synchronization at which the training reward first drops to zero.
Table~\ref{tab:collapse} reports these counts for every collapsing configuration.

\begin{table}[htbp]
\centering
\caption{Weight-synchronization index $M_{\mathrm{collapse}}$ at which training collapses, as a function of staleness $S$ and learning rate $\eta$ (results consistent across the 1B and 3B policies). ``stable'' denotes runs that survive the full training horizon. In each collapsing cell we report, in parentheses, the normalized product $M_{\mathrm{collapse}}\!\cdot\!S\!\cdot\!\tilde\eta$ with $\tilde\eta\triangleq\eta/10^{-7}$. The product is invariant at $320$ across \emph{all} collapsing configurations, i.e.\ $M_{\mathrm{collapse}}\,S\,\eta\approx 3.2\!\times\!10^{-5}$; equivalently the learner-step collapse time $t_{\mathrm{collapse}}=M_{\mathrm{collapse}}S$ depends only on $\eta$.}
\label{tab:collapse}
\begin{tabular}{lcccc}
\toprule
& $\eta=10^{-6}$ & $\eta=5\!\times\!10^{-7}$ & $\eta=2\!\times\!10^{-7}$ & $\eta=10^{-7}$ \\
\midrule
$S=8$  & $4$ \,(320) & $8$ \,(320) & stable      & stable \\
$S=16$ & $2$ \,(320) & $4$ \,(320) & $10$ \,(320) & stable \\
$S=32$ & $1$ \,(320) & $2$ \,(320) & $5$ \,(320)  & $10$ \,(320) \\
\midrule
$t_{\mathrm{collapse}}=M_{\mathrm{collapse}}S$ & $32$ & $64$ & $160$ & $320$ \\
\bottomrule
\end{tabular}
\end{table}

Two regularities stand out.
First, converting weight-sync counts to learner steps via $t_{\mathrm{collapse}}=M_{\mathrm{collapse}}S$ (bottom row of Table~\ref{tab:collapse}), the collapse step depends only on $\eta$ and is identical across $S=8,16,32$: $t_{\mathrm{collapse}}=32,64,160,320$ for $\eta=10^{-6},5\!\times\!10^{-7},2\!\times\!10^{-7},10^{-7}$, so that $t_{\mathrm{collapse}}\,\eta\approx 3.2\!\times\!10^{-5}$ in every collapsing run---exactly the $S$-independent horizon law of Eq.~\eqref{eq:t-collapse-bound}.
Second, because $M_{\mathrm{collapse}}=t_{\mathrm{collapse}}/S$, the same fixed step budget is reached in fewer weight syncs as $S$ grows, giving $M_{\mathrm{collapse}}\,S\,\eta\approx\text{const}$ (the normalized product equals $320$ in all nine collapsing cells).
This explains why collapse appears to ``arrive faster'' at higher staleness when watching the weight-sync counter (e.g.\ $S=32,\eta=10^{-6}$ fails after a single rollout cycle): a larger $S$ packs more learner steps into each batch, even though the underlying learner-step horizon is unchanged.

\begin{takeaway}
For runs that collapse, the learner-step collapse time satisfies $t_{\mathrm{collapse}}\,\eta\approx 3.2\!\times\!10^{-5}$, \emph{independent of $S$}, and equivalently $M_{\mathrm{collapse}}\,S\,\eta\approx\text{const}$ (Thm.~\ref{thm:collapse}). The collapse \emph{horizon} is governed by cumulative learner drift $T\eta$, not by staleness; $S$ only rescales how many weight syncs fit inside that horizon.
\end{takeaway}

\noindent The escape-time law applies only to runs already in the unstable regime, and does not contradict the stability boundary of \S\ref{subsec:phase}: by the bare horizon budget, $S=8,\eta=2\!\times\!10^{-7}$ would reach $t_{\mathrm{collapse}}=160$ within the training window, yet it does \emph{not} collapse. The next subsection resolves this apparent tension.

\subsection{Coherent vs.\ Diffusive Drift: Reading the Gradient Cosine Similarity}
\label{subsec:coherence}

\paragraph{Theory prediction.}
The asynchronous update decomposes as $g=\mathcal{H}_{\mathrm{on}}(\theta_t)+\xi_t+\delta_t$ (Eq.~\eqref{eq:effective-update}), with a systematic staleness bias $\|\delta_t\|=O(S\eta)$ (Thm.~\ref{thm:scaling}) and zero-mean noise $\xi_t$.
When $\delta_t$ dominates, consecutive updates share a direction and drift is \emph{ballistic} ($\sim t\eta$), reaching $R_{\mathrm{crit}}$ on the escape-time horizon; when $\xi_t$ dominates, updates decorrelate and drift is \emph{diffusive} ($\sim\sqrt{t}\eta$), which accumulates too slowly to reach $R_{\mathrm{crit}}$.
The Grad CosSim between consecutive updates should therefore read out which regime a run is in.

\paragraph{Observation: unstable runs are ballistic.}
In every configuration that collapses (Figures~\ref{fig:metrics_grid_1b}--\ref{fig:metrics_grid_3b}), the Grad CosSim remains persistently high---frequently near $1$---throughout the run leading up to the collapse.
A high cosine similarity means successive updates point in a common direction, so per-step displacements add up linearly and the learner drifts ballistically toward the surrogate-validity boundary, consistent with the $t_{\mathrm{collapse}}\,\eta\approx R_{\mathrm{crit}}/G_{\mathrm{upd}}$ timescale of Table~\ref{tab:collapse}.
The collapse in the reward and validation-reward panels is correspondingly abrupt rather than gradual.

\paragraph{Observation: stable runs are diffusive.}
Figure~\ref{fig:smalllr_3b} isolates the small-learning-rate regime ($\eta\in\{8,7,6,5\}\!\times\!10^{-8}$) on the 3B policy, deliberately trained over an \emph{extended horizon} of roughly $1.4\!\times\!10^{3}$ steps---more than twice the window of the main sweep.
Here the Grad CosSim rises only briefly during the initial reward climb and then \emph{decays toward zero}, where it remains for the rest of training: consecutive updates are essentially uncorrelated, so $\xi_t$ dominates $\delta_t$ and the learner performs a diffusive walk.
This sub-linear ($\sqrt{t}\eta$) accumulation is the decisive point: were the drift ballistic, the escape-time law $t_{\mathrm{collapse}}\eta\approx\text{const}$ would place \emph{no} lower bound on $\eta$ below which collapse is forbidden, so even these tiny learning rates would eventually exhaust the horizon budget.
Instead, all four runs train stably for the entire extended horizon with monotonically improving validation reward and show no sign of collapse, ruling out a merely-delayed ballistic escape.

\begin{takeaway}
The Grad CosSim directly witnesses the decomposition of Eq.~\eqref{eq:effective-update}: high cosine (coherent $\delta_t$, ballistic, collapses on the $t\eta$ horizon) versus near-zero cosine (noise $\xi_t$ dominates, diffusive $\sqrt{t}\eta$, stable indefinitely). The extended-horizon stability of small-$\eta$ runs confirms that small $S\eta$ does not merely \emph{delay} collapse---it removes the coherent drift that would cause it.
\end{takeaway}

\paragraph{Reconciling the two constraints.}
The two regimes are two facets of the same decomposition, and together they instantiate the two-constraint rule $\eta\ll\min\{R_{\mathrm{batch}}/(S G_{\mathrm{upd}}),\,R_{\mathrm{crit}}/(T G_{\mathrm{upd}})\}$ of Eq.~\eqref{eq:practical-rule}.
The local stale-rollout condition $S\eta\lesssim R_{\mathrm{batch}}/G_{\mathrm{upd}}$ governs \emph{whether} $\delta_t$ is large enough to dominate $\xi_t$ and make the drift coherent; crossing it flips a run from the diffusive (stable) to the ballistic (collapsing) regime, yielding $\eta_{\max}\propto 1/S$ (\S\ref{subsec:phase}).
The horizon-level condition $T\eta\lesssim R_{\mathrm{crit}}/G_{\mathrm{upd}}$ then governs \emph{when} a ballistic run reaches the boundary, yielding the $S$-independent escape-time law (\S\ref{subsec:escape}).
The case $S=8,\eta=2\!\times\!10^{-7}$ survives not because its horizon budget is unspent, but because its small $S\eta$ keeps it on the diffusive side of the first constraint, where the escape-time argument does not apply.

\begin{figure}[htbp]
    \centering
    \includegraphics[width=\linewidth]{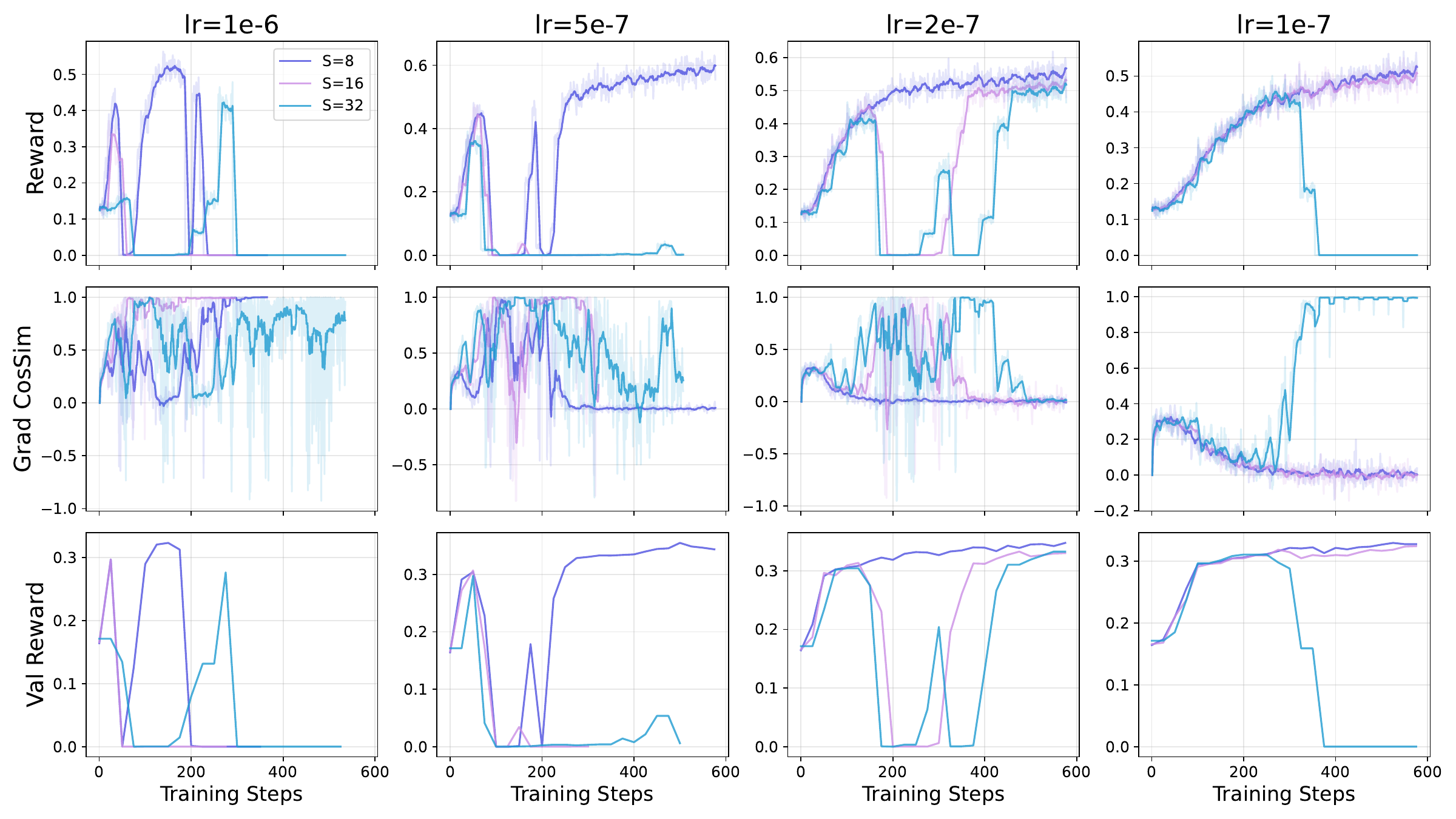}
    \caption{Staleness--learning-rate sweep on \texttt{Llama-3.2-1B-Instruct}.
    Columns vary the learning rate ($\mathrm{lr}=\eta$) and within each panel the curves correspond to staleness $S\in\{8,16,32\}$.
    \emph{Top:} training reward; collapse appears as the curve dropping to and staying at zero.
    \emph{Middle:} cosine similarity between consecutive update directions (Grad CosSim).
    \emph{Bottom:} held-out validation reward.
    Reading left-to-right, reducing $\eta$ enlarges the stable region from large-$S$ to small-$S$ settings, so that the maximum stable learning rate scales as $\eta_{\max}\!\propto\!1/S$ (Section~\ref{subsec:phase}).
    In every collapsing run the Grad CosSim stays high before the collapse, indicating a coherent, bias-driven (ballistic) drift toward the surrogate-validity boundary; the abrupt zero-reward transition is the resulting escape (Section~\ref{subsec:coherence}).}
    \label{fig:metrics_grid_1b}
\end{figure}

\begin{figure}[htbp]
    \centering
    \includegraphics[width=\linewidth]{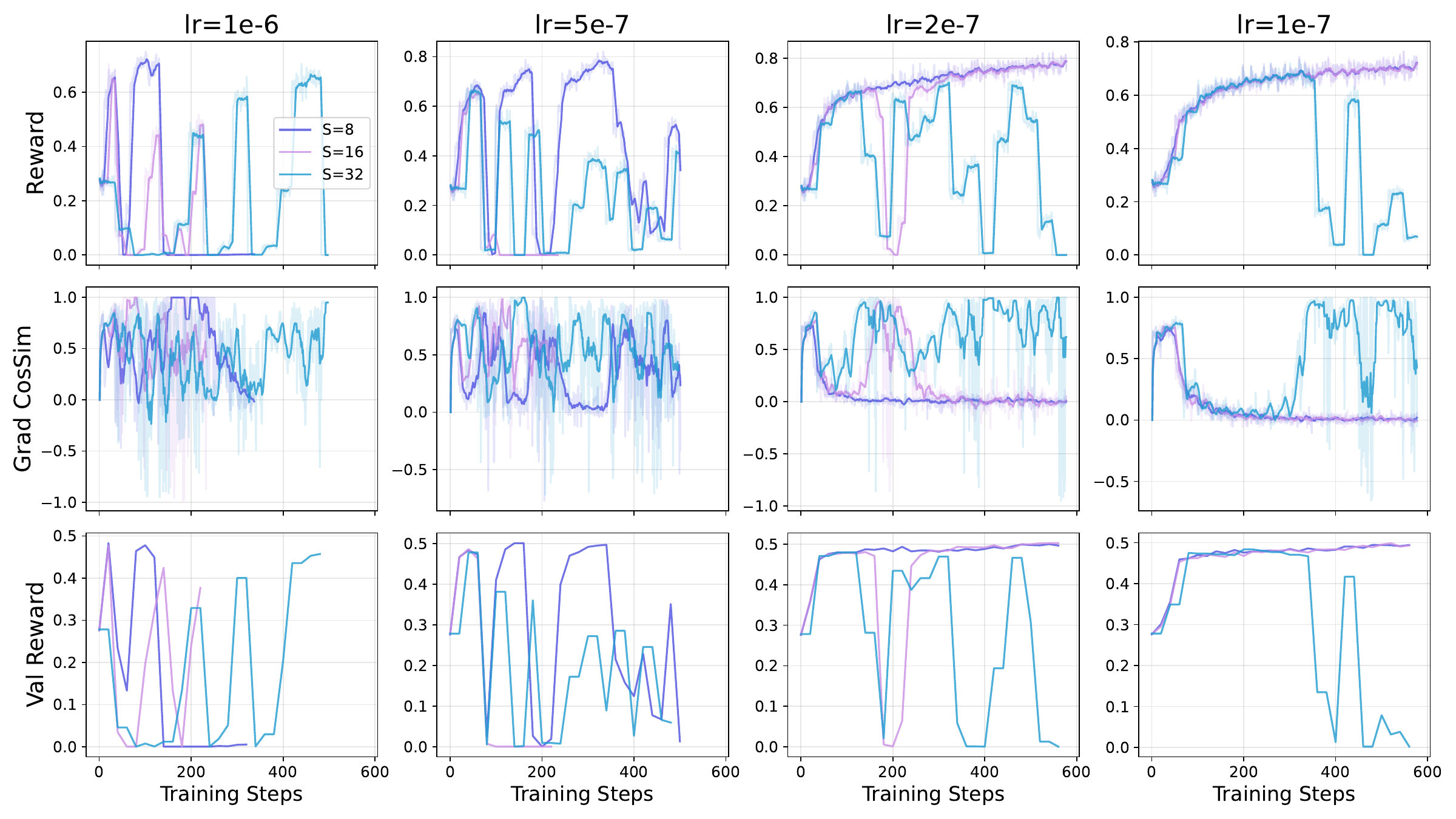}
    \caption{Staleness--learning-rate sweep on \texttt{Llama-3.2-3B-Instruct}, with the same layout as Figure~\ref{fig:metrics_grid_1b} (columns: $\eta$; curves: $S\in\{8,16,32\}$; rows: training reward, Grad CosSim, validation reward).
    The same monotone phase structure holds at the larger scale: higher staleness requires a smaller learning rate for stability, and collapsing runs again exhibit persistently high gradient cosine similarity preceding an abrupt collapse.
    The collapse weight-sync counts of Table~\ref{tab:collapse} are consistent across both model scales, supporting the $M_{\mathrm{collapse}}S\eta\!\approx\!\text{const}$ escape-time law (Section~\ref{subsec:escape}).}
    \label{fig:metrics_grid_3b}
\end{figure}

\begin{figure}[htbp]
    \centering
    \includegraphics[width=\linewidth]{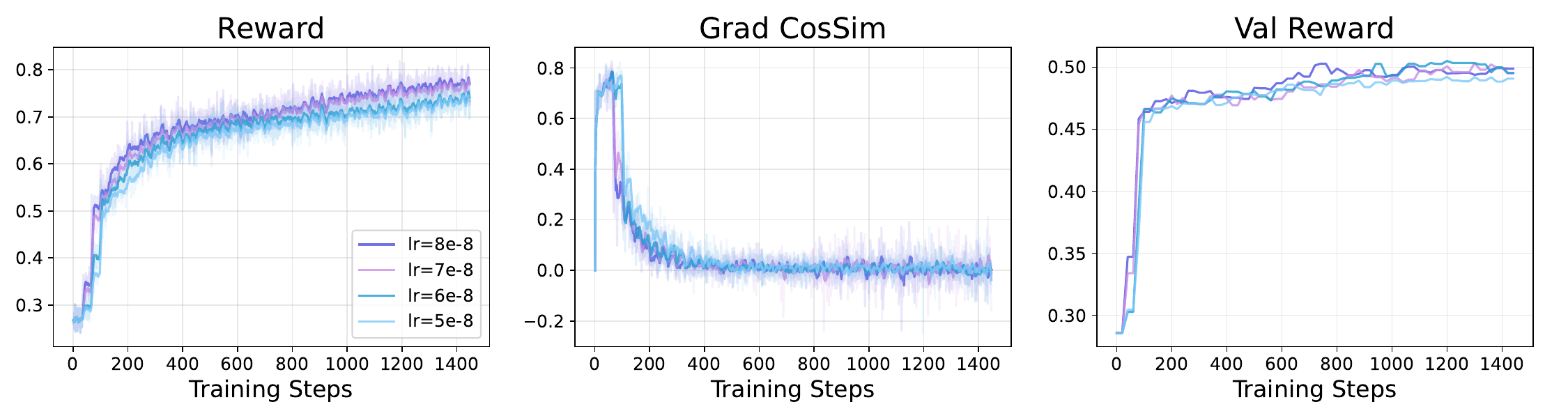}
    \caption{Small-learning-rate regime on \texttt{Llama-3.2-3B-Instruct} ($\eta\in\{8,7,6,5\}\!\times\!10^{-8}$), shown over a long horizon.
    \emph{Left:} training reward; \emph{middle:} gradient cosine similarity; \emph{right:} validation reward.
    Unlike the collapsing runs of Figures~\ref{fig:metrics_grid_1b}--\ref{fig:metrics_grid_3b}, the Grad CosSim rises only briefly during the initial reward climb and then decays toward zero, where it remains.
    Near-zero cosine similarity indicates that the zero-mean sampling noise $\xi_t$ dominates the staleness bias $\delta_t$, so the learner drift is \emph{diffusive} ($\sim\!\sqrt{t}\,\eta$) rather than ballistic ($\sim\!t\,\eta$).
    The surrogate-validity boundary is therefore never reached, and all four runs converge stably (Section~\ref{subsec:coherence}).}
    \label{fig:smalllr_3b}
\end{figure}

\section{Related Work}
\label{sec:related-work}

\paragraph{RLHF policy optimization.}
RLHF commonly combines supervised fine-tuning, reward modeling, and policy optimization~\citep{ouyang2022training,christiano2017deep,kaufmann2023survey}.
PPO-style methods use clipped importance ratios and value baselines~\citep{schulman2017proximal}, while GRPO removes the critic by using group-relative reward statistics~\citep{shao2024deepseekmath}.
Our analysis focuses on the asynchronous behavior-policy mismatch that arises when such surrogate objectives are optimized using stale rollout batches.

\paragraph{Asynchronous and delayed optimization.}
Asynchronous SGD has been studied in shared-memory and parameter-server settings~\citep{recht2011hogwild,agarwal2011distributed,lian2015asynchronous,li2014scaling,zhou2018distributed}.
Those analyses typically model staleness as a bounded delay in gradient computation.
In RLHF, the delay also changes the data distribution and the behavior-policy denominator in the policy-ratio surrogate; the resulting bias is therefore not captured by delayed-SGD bounds alone.

\paragraph{Decoupled actor-learner architectures in RL.}
Decoupling acting from learning is the central design pattern of large-scale distributed RL, from Gorila~\citep{nair2015massively} and A3C~\citep{mnih2016asynchronous} to IMPALA~\citep{espeholt2018impala}, Ape-X~\citep{horgan2018distributed}, and SEED RL~\citep{espeholt2020seed}.
A consistent theme is that the throughput gained by decoupling must be paid for in off-policy bias: IMPALA stabilizes high-throughput training only after adding the V-trace off-policy correction~\citep{espeholt2018impala}, while DD-PPO~\citep{wijmans2020ddppo} avoids the problem by remaining fully synchronous, and reproducibility studies highlight how stale rollouts perturb actor-learner PPO dynamics~\citep{huang2023cleanba}.
Our setting differs in two respects.
First, GRPO-based RLHF~\citep{shao2024deepseekmath} relies on clipped importance ratios and KL regularization rather than a dedicated off-policy estimator, so the burden of stability falls on the $(S,\eta)$ configuration itself.
Second, rather than proposing a new correction, we characterize \emph{when} stale rollouts are tolerable, separating a per-step bias controlled by $S\eta$ from a horizon-level drift controlled by $T\eta$.

\paragraph{Scaling laws and training stability.}
Scaling laws for language models relate loss or efficiency to model size, data, compute, and optimization hyperparameters~\citep{kaplan2020scaling,hoffmann2022training,mcandlish2018empirical}.
This paper studies a complementary scaling interaction between rollout staleness and learning rate in asynchronous policy optimization.

\section{Conclusion}
\label{sec:conclusion}

We analyzed asynchronous GRPO training with stale rollout batches by making the behavior policy explicit in the surrogate gradient.
The resulting decomposition shows that stale rollouts introduce a per-step surrogate-gradient bias of order $O(S\eta)$ under local smoothness assumptions.
The collapse-time argument should be interpreted conditionally: when within-cycle stale-rollout drift remains below a batch-level clipping radius, the observed collapse horizon is governed primarily by cumulative learner drift $T\eta$; otherwise, the stable learning rate can depend directly on $S\eta$.
This yields a two-constraint stability rule that reconciles the single-step stale-bias bound with the empirical observation that learning-rate thresholds may appear weakly dependent on staleness in the horizon-limited regime.



\bibliographystyle{plainnat}
\bibliography{bibliography}

\newpage
\appendix
\section{Relaxing Bounded Updates}
\label{app:hp_bounds}

Assumption~\ref{ass:bounded_grad} is a convenient main-text condition.
A weaker version can replace the almost-sure update bound with a bounded second moment, for example
\begin{equation}
\mathbb{E}\left[\|g(\theta_t;\mathcal{B}_{\phi_t})\|^2\mid \theta_t,\phi_t\right]
\le
G_2^2.
\end{equation}
Then the expected stale-policy drift obeys
\begin{equation}
\mathbb{E}\left[\|\theta_t-\theta_{t-k_t}\|\right]
\le
\eta k_t G_2
\le
\eta S G_2,
\end{equation}
by Jensen's inequality and the triangle inequality.
Under standard sub-Gaussian or clipped-update conditions, the same scaling holds with high probability up to logarithmic factors.
Thus the main-text $O(S\eta)$ staleness scaling is not tied to an almost-sure bound, although the constants and probability statements change.

\section{Separate Rewards}
\begin{figure}[htbp]
    \centering
    \includegraphics[width=\linewidth]{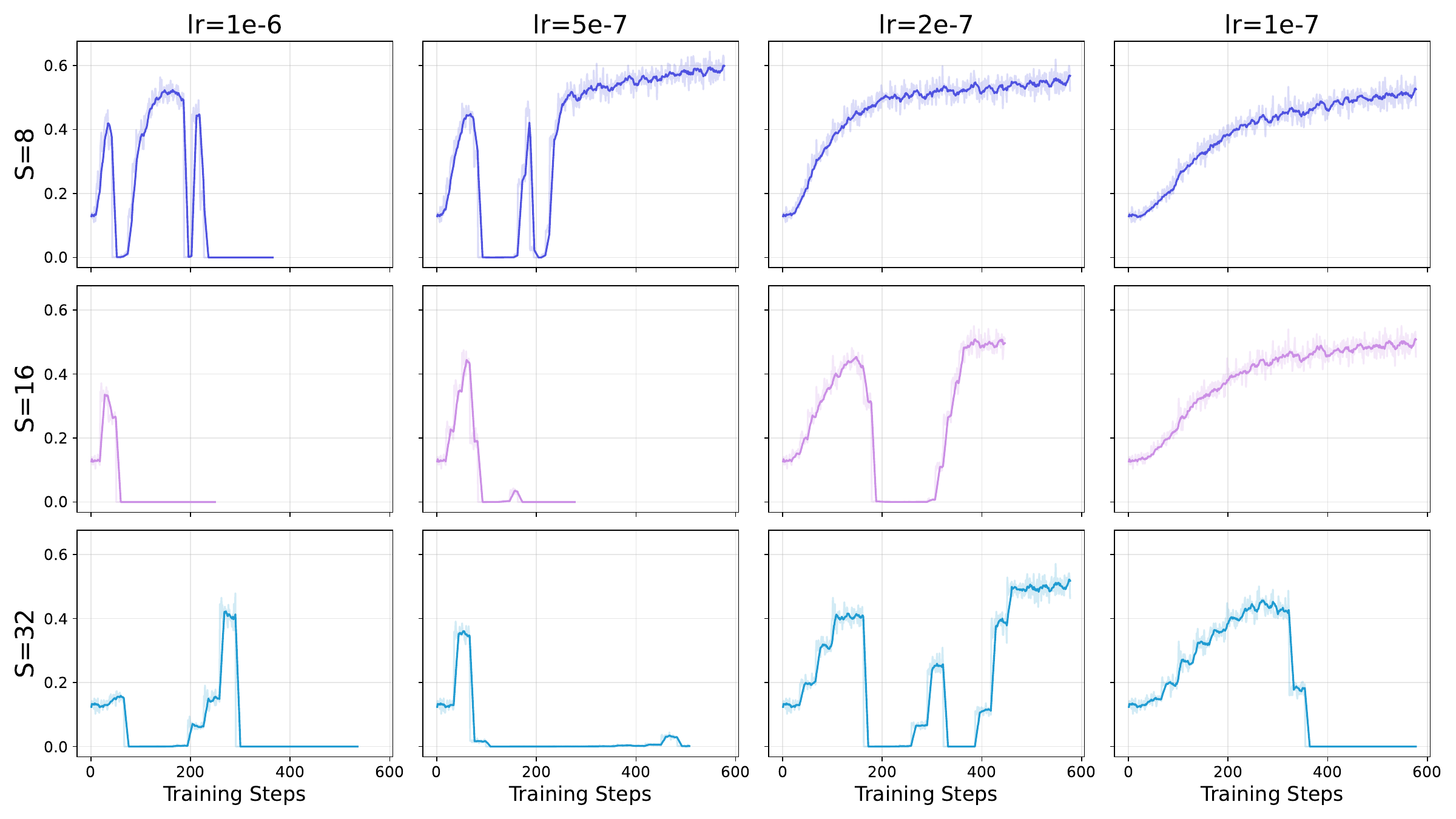}
    \caption{Per-staleness training-reward curves on \texttt{Llama-3.2-1B-Instruct}, separated into individual panels for legibility (companion to Figure~\ref{fig:metrics_grid_1b}).
    Each panel fixes one $(S,\eta)$ pairing; collapse corresponds to the reward dropping to zero.
    The onset of collapse follows the escape-time scaling $M_{\mathrm{collapse}}S\eta\!\approx\!\text{const}$ of Table~\ref{tab:collapse}.}
    \label{fig:reward_grid_1b}
\end{figure}

\begin{figure}[htbp]
    \centering
    \includegraphics[width=\linewidth]{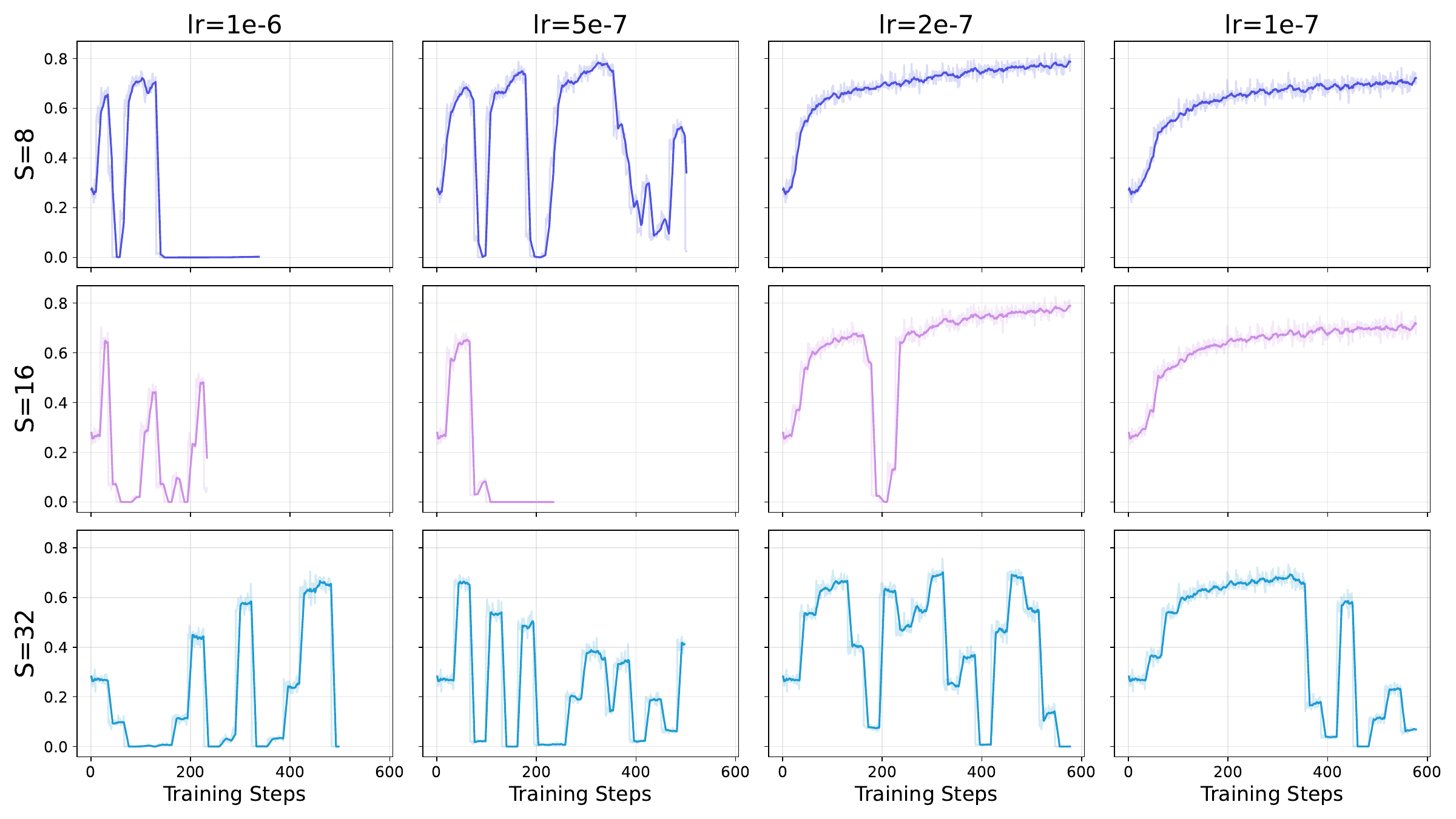}
    \caption{Per-staleness training-reward curves on \texttt{Llama-3.2-3B-Instruct}, separated into individual panels for legibility (companion to Figure~\ref{fig:metrics_grid_3b}).
    As with the 1B policy, the collapse onset across $(S,\eta)$ is consistent with the $S$-independent collapse horizon $t_{\mathrm{collapse}}\eta\!\approx\!\text{const}$ (Section~\ref{subsec:escape}).}
    \label{fig:reward_grid_3b}
\end{figure}


\end{document}